\pdfoutput=1
\documentclass[twoside,11pt]{article}
\usepackage{pgm}


\usepackage[utf8]{inputenc}
\inputencoding{utf8}

\usepackage{hyperref,color,soul,booktabs}
\setulcolor{blue}
\newcommand{\BibTeX}{\textsc{B\kern-0.1emi\kern-0.017emb}\kern-0.15em\TeX}
\usepackage{algpseudocode}
\usepackage{comment}
\usepackage[linesnumbered,ruled,vlined]{algorithm2e}
\usepackage[export]{adjustbox}
\usepackage{wrapfig}
\usepackage{subcaption}
\usepackage{natbib}
\setlength{\bibsep}{0.0pt}

\usepackage{sidecap}
\usepackage{array}
\usepackage{tabularx}
\usepackage{latexsym}
\usepackage{amsmath}
\usepackage[textwidth=6.5in, textheight=9.0in, bottom=1.0in, left=1.0in, nofoot]{geometry}
\usepackage[normalem]{ulem}
\usepackage{url}
\usepackage{enumitem}
\usepackage{microtype}
\usepackage{thmtools,thm-restate}

\newtheorem{defn}{Definition}

\newcommand{\Scal}{\mathcal{S}}

\DeclareRobustCommand{\rspmn}{\textsf{RSPMN}}


\begin{document}

\addtolength{\footskip}{20pt}

\title{Recurrent Sum-Product-Max Networks for Decision Making in Perfectly-Observed Environments}

\author{\Name{Hari Teja Tatavarti} \Email{contactme.hariteja@uga.edu}\and
  \Name{Prashant Doshi} \Email{pdoshi@uga.edu} \and
    \Name{Layton Hayes} \Email{layton.hayes25@uga.edu}\\ 
   \addr Institute for AI, University of Georgia, Athens, GA 30602}

\maketitle

\begin{abstract}
Recent investigations into sum-product-max networks (SPMN) that generalize sum-product networks (SPN) offer a data-driven alternative for decision making, which has predominantly relied on handcrafted models. SPMNs computationally represent a probabilistic decision-making problem whose solution scales linearly in the size of the network. However, SPMNs are not well suited for {\em sequential} decision making over multiple time steps. In this paper, we present recurrent SPMNs (RSPMN) that learn from and model decision-making data over time. RSPMNs utilize a template network that is unfolded as needed depending on the length of the data sequence. This is significant as RSPMNs not only inherit the benefits of SPMNs in being data driven and mostly tractable, they are also well suited for sequential problems. We establish conditions on the template network, which guarantee that the resulting SPMN is valid, and present a structure learning algorithm to learn a sound template network. We demonstrate that the RSPMNs learned on a testbed of sequential decision-making data sets generate MEUs and policies that are close to the optimal on perfectly-observed domains. They easily improve on a recent batch-constrained reinforcement learning method, which is important because RSPMNs offer a new model-based approach to offline reinforcement learning.
\end{abstract}

\begin{keywords}
machine learning; sequential decision making; tractable probabilistic models;  batch RL.
\end{keywords}

\vspace{-0.1in}
\section{Introduction}
\label{sec:intro}
\vspace{-0.05in}

Arithmetic circuits~\citep{huang2006solving} and sum-product networks (SPN)~\citep{poon2011sum} 
 directly learn a network polynomial that is graphically represented as a network  of sum  and product nodes from domain data.
 Evaluations  of the polynomial provide the joint or conditional  distributions as desired. These graphical models are appealing because most types of inference can be performed in  time that is  {\em linear} in  the size  of the network. On the other hand, inference in Bayesian networks is generally exponential. A limitation of SPNs is that the size of the learned network is not bounded. 
 
Given the overall benefit of these generative models, Melibari et al.~(\citeyear{Melibari16:Dynamic}) introduced recurrent SPNs as a generalization of SPNs for modeling sequence data of varying length. In particular, if a recurrent SPN is a valid SPN, inference queries can be answered in linear time, thereby providing a way to perform tractable inference on sequence data.

Sum-product-max networks (SPMN)~\citep{Melibari16:Sum} generalize SPNs by introducing two new types of nodes to an SPN: max and utility nodes. Max nodes correspond to decision variables and utility nodes to the reward function, which allow SPMNs to computationally represent a probabilistic decision-making problem. If the SPMN learned from data is valid by satisfying a set of properties, then it correctly encodes a function that computes the maximum expected utility given the partial order between the variables. As such, valid SPMNs potentially represent a shift in paradigm for decision-making models: from being primarily handcrafted to enabling machine learning from decision-making data.  

Motivated by these recent generalizations of the SPN, we present a new graphical model that extends the twin benefits of an SPN (tractable inference and directly learned from data) to a new class of problems. This new model, which we refer to as a recurrent SPMN (\rspmn{}) can be seen as a synthesis of a recurrent SPN and an SPMN: it allows extending the decision-making problem across multiple time steps thereby modeling {\em sequential} decision-making problems for the first time. Given decision-making data consisting of finite temporal sequences of values of state and utility variables, and decisions, we present an effective method for learning an \rspmn{} of any finite length from this data and evaluating it to obtain the maximum expected utility (MEU) and the corresponding policy. A key component of the learned model is the {\em template network}, whose repeated application makes the temporal generalization possible.   

We prove that unfolding the learned \rspmn{} produces a valid SPMN, which, in combination with a result from Melibari et al.~(\citeyear{Melibari16:Dynamic}) establishes that its evaluation is equivalent to using the sum-max-sum rule. On a testbed of decision-making datasets from simulations in perfectly-observed domains, we demonstrate that the learned \rspmn{}s generate MEUs that are close to the optimal. \rspmn{}s offer a model-based approach to offline (batch) reinforcement
learning where simulation data has already been collected. Consequently, we also compare the MEUs with those from a recent batch-constrained Q-learning method~\citep{fujimoto2019benchmarking} and report favorable results.        

\vspace{-0.1in}
\section{Background on SPMNs}
\label{sec:background}
\vspace{-0.05in}

We briefly review SPNs followed by its generalization to decision-making contexts, SPMNs. An SPN~\citep{poon2011sum} over $n$ random variables $X_1$, $\ldots$, $X_n$ is a rooted directed acyclic graph whose leaves are the distributions of the random variables 
and whose internal nodes are sums and products. Each edge emanating from a sum node has a non-negative weight. The value of a product node is the product of the values of its children. The value of a sum node is the weighted sum of its children's values. The value of an SPN is the value of its root. This value is the output of a network polynomial whose variables are the indicator variables and the coefficients are the weights~\citep{darwiche2000differential}. The polynomial represents the joint probability distribution over the variables if the SPN is valid. Completeness and decomposability are sufficient conditions for validity. Both impose some conditions on the scope of each node, defined below.

\vspace{-0.1in} 
\begin{defn} [Scope] The scope of a node is the union of scopes of its children, where the scope of a leaf node is itself.
\vspace{-0.1in} 
\end{defn}
In other words, the scope is the set of variables that appear in the sub-SPN rooted at that node. Next, we define the conditions that must hold for an SPN to be valid. 

\vspace{-0.1in} 
\begin{defn} [Sum-complete]
An SPN is complete iff all children of the same sum node have the same scope.
\label{def:sum-complete}
\vspace{-0.1in} 
\end{defn}

\vspace{-0.3in} 
\begin{defn} [Decomposable]
An SPN is decomposable iff no variable appears in more than one child of a product node.
\label{def:decomposable}
\vspace{-0.1in} 
\end{defn}

As it is difficult for handcrafted SPNs to meet these conditions, various structure and parameter learning algorithms have been presented to learn valid SPNs from data~\citep{poon2011sum,adellearning,gens2013learning,lowd2013learning}. Most types of inference on the structure thus learned is tractable in the size of the network.

 SPMNs~\citep{Melibari16:Sum} generalize SPNs by introducing two new types of nodes to SPNs. Max nodes that represent decision variables and utility nodes to represent the utility function. 
 An SPMN over decision variables
$D_1, \ldots, D_m$, random variables $X_1, \ldots, X_n$, and utility functions $U_1, \ldots, U_k$ is a rooted directed acyclic graph. Its leaves are either distributions over random variables or utility nodes that hold constant values. An internal node of an SPMN is either a sum, product, or max node. Each max node corresponds to one of the decision variables and each outgoing edge from a max node is labeled with one of the possible values of the corresponding decision variable. Value of a max node $i$ is $max_{j\epsilon Children(i)} v_j$, where $Children(i)$ is the set
of children of $i$, and $v_j$ is the value of the subgraph rooted at child $j$. 
    
Recall the concepts of information sets and partial ordering in influence diagrams~\citep{koller2009probabilistic}. Information sets $I_0$, $\ldots$, $I_m$ are subsets of the random variables such that the random variables in the information set $I_{i-1}$ are observed before the decision associated with variable $D_i$, $1 \leq i \leq m$, is made. Any information set may be empty and variables in $I_m$ need not be observed before some decision node. An ordering between the information sets may be established as follows:
$I_0$ $\prec$ $D_1$ $\prec$ $I_1$ $\prec$ $D_2$, $\ldots$, $\prec$ $D_m$ $\prec$ $I_m$. This is a partial order, denoted by $P^\prec$, because variables within each information set may be observed in any order. Melibari et al. show that a set of properties are needed to ensure that an
SPMN correctly encodes a function that computes the MEU given the partial order between the variables and some utility function $U$. In particular, an SPMN is valid if it satisfies Defs.~\ref{def:sum-complete} and~\ref{def:decomposable}, and two new additional properties: 

\vspace{-0.1in} 
\begin{defn}[Max-complete]
An SPMN is max-complete iff all children of the same max node have the same scope, where the scope is as defined previously.
\label{def:max-complete}
\vspace{-0.1in} 
\end{defn}
\vspace{-0.1in} 
\begin{defn}[Max-unique] An SPMN is max-unique iff each max node that corresponds to a decision variable D appears at most once in every path from root to leaves.
\label{def:max-unique}
\vspace{-0.1in} 
\end{defn}

An SPMN is solved by assigning values to the random variables that are consistent with the evidence. Then, we perform a bottom-up pass of the network during which operators at each node are applied to the values of the children. The optimal decision rule is found by tracing back (i.e., top-down) through the network and choosing the edges that maximize the decision nodes.

\vspace{-0.1in}
\section{Recurrent SPMNs}
\label{sec:RSPMNs}
\vspace{-0.1in}

Popular frameworks such as a Markov decision process (MDP) and languages such as dynamic influence diagrams~\citep{shachter2010dynamic} model long-term decision making as a temporal sequence of decision-making steps. For our purposes, each of these steps can be modeled using a structure analogous to an SPMN. This could yield a structure that is similar to a dynamic influence diagram, which unfolds an influence diagram with temporal links as many times as the number of steps in the extended problem thereby generating a much larger influence diagram that models the complete sequence. 
  
We take this perspective to modeling sequential decision making and introduce a {\em recurrent SPMN} (\rspmn{}), which unfolds a {\em template network} as many times as the number of time steps in each sequence of data. While the template network is not rooted at a single node and is not a valid SPMN, we obtain these by learning an  additional component: a top network that caps the unfolded templates, which, in conjunction with some properties on the structure of the template then yields a valid SPMN.

An alternative approach to the recurrent SPMN is to directly learn the SPMN from the sequence data using the LearnSPMN algorithm~\citep{Melibari16:Sum}. However, this poses two main challenges. First, an increase in the sequence length often leads to an exponential blow up of the size of the network and subsequently in evaluation time as we demonstrate later in our experiments. Second, the LearnSPMN algorithm requires a fixed number of variables in each data record. Hence, it may not be used when the sequence length varies between records as there may not always be an efficient way to either fill in the missing time steps for shorter sequences or eliminate extra sequences from the longer ones.

We begin by describing which domain attributes should be present in the data to allow learning RSPMNs followed by formal definitions and illustrations of the components of the RSPMN. 

\subsection{Data Schema}
\label{subsec:schema}

Useful data for learning RSPMNs consists of a finite temporal sequence of values of state and utility variables, and decisions that are actions. More formally, consider a decision-making problem where the (fully observed) state of the environment is characterized by $n$ variables, $X_1$, $X_2$, $\ldots$, $X_n$; decisions by a combination of $m$ decision variables,
$D_1$, $D_2$, $\ldots$, $D_m$; and a single utility variable $U$. A candidate data record of at most $T$ steps is then a sequence of $T$ tuples of the form $\langle (I_0, d_1, I_1, d_2, \ldots, I_{m-1}, d_m, I_m,u)^0$, $(I_0, d_1, I_1, d_2, \ldots, I_{m-1}, d_m, I_m, u)^1$, $\ldots$,
$(I_0, d_1, I_1, d_2, \ldots, I_{m-1}, d_m, I_m, u)^{T-1})\rangle$. Recall from Section~\ref{sec:background}, $I_0, I_1, \ldots, I_m$ are information sets where $I_{i-1}$, $1 \leq i \leq m$ consists of values of the state variables in the information set of $D_i$. Additionally, $u$ in each tuple is the value of utility variable $U$ given the realizations of the state variables and decisions in that tuple.

\subsection{RSPMN Properties and Validity}

An RSPMN models sequences of decision-making data of varying lengths using a fixed set of parameters by unfolding a template network. In the context of a dynamic influence diagram, our template corresponds to an influence diagram with temporal links between nodes that are repeated in each time slice. 

\vspace{-0.1in}
\begin{defn}[Template network]
A template network 
is a directed acyclic graph with $r$ root nodes and at least $n+1$ leaf nodes where $n$ is the number of state variables and there is one utility function. The root nodes form a set of interface nodes $Ir$. The leaf nodes in the network hold the distributions over the random state variables $X_1, X_2, \dots, X_n$, hold constant values as utility nodes, or are latent interface nodes. The root interface nodes and interior nodes can either be sum, product, or max nodes. Let $L$ denote the set of leaf latent interface nodes. Each latent node in $L$ is related to a root interface node in $Ir$ of the template network through a bijective mapping   $f: L \rightarrow Ir$.
\vspace{-0.1in}
\end{defn}

 The bijective mappings can be seen as time delay edges that link and let replace latent interface nodes at time step $t$ with root interface nodes at $t+1$, thereby enabling recurrence of the template. The scope of any leaf latent node is itself. But, the scope changes when the template network is unfolded. The scope of a latent node of a template network in time step $t$ is related to the scope of a root interface node of the template network at time step $t+1$. More formally, for any pair of latent nodes $l_i^t, l_j^t \in L$, let $f(l_i^t) = ir_i^{t+1}, f(l_j^t) = ir_j^{t+1}$, where $ir_i^{t+1}, ir_{j}^{t+1} \in Ir$, then 
$ \left ( scope(ir_i^{t+1}) = scope(ir_j^{t+1}) \right ) \Rightarrow \left ( scope(l_i^t) = scope(l_j^t) \right ).$

Intuitively, the leaf latent interface nodes can be viewed as summarizing the latent information coming from the subsequent template network. They pass information between templates of different steps. In other words, they pass up the information in a bottom-up evaluation of the \rspmn{} and pass down the information in a top-down pass. As such, the root and leaf latent nodes play a key role in linking the template networks during unfolding. 

Toward ensuring that the unfolded network is a valid SPMN with a single root, we define another special network as given below.

\vspace{-0.1in}
\begin{defn}[Top network]
A top network is a rooted directed acyclic graph consisting of sum and product nodes, and whose leaves are the latent interface nodes. Edges from a sum node are weighted as in a SPN
\label{def:top_network}
\vspace{-0.1in}
\end{defn}

Of course, the bottom-most template network -- corresponding to the final time step $T$ of the sequential decision making -- has its leaf interface nodes removed. Parents of these interface nodes that are sum or product nodes with no other children are also pruned. We may effectively achieve this by setting the values of all these interface nodes as 1 (thus summing them out) and any utility values to pass set to 0.

Next, we seek to ensure that the SPMN formed after interfacing the top network and repeated templates is valid. One way to check for validity is to ensure that all the sum nodes in the unfolded SPMN are complete, the product nodes are decomposable, and max nodes are complete and unique as in defined in Section~\ref{sec:background}. However, can we define constraints on the top and template networks that will ensure validity of the unfolded SPMN? If so, we may establish the validity without checking the full network, which may grow to be quite large. To establish this, we first introduce a {\em soundness} property for the template network. 


\vspace{-0.1in}
\begin{defn}[Soundness of the template] A template network is sound iff 
all sum nodes in the template are sum-complete as defined in Def.~\ref{def:sum-complete};
all product nodes in the template are decomposable as defined in Def.~\ref{def:decomposable};
all max nodes in the template are max-unique and max-complete as defined in Defs.~\ref{def:max-complete} and~\ref{def:max-unique};
the scope of all the root interface nodes in $Ir$ is the same, i.e.,
    $scope(ir_i) = scope(ir_j) ~~~ \forall ~ir_i, ir_j \in Ir;$
and, the scopes of the leaf latent interface nodes in $L$ are related to that of the mapped root interface nodes in $Ir$.

\label{def:template-sound}
\vspace{-0.1in}
\end{defn}

Next, Theorem~\ref{thm:RSPMN-valid}  establishes that a sound template network combined with a valid top network generates a valid SPMN on unfolding the \rspmn{}. We provide the proof in the online Appendix at \url{https://github.com/c0derzer0/RSPMN}. 

\vspace{-0.1in}
\begin{restatable}[Validity of \rspmn{}]{thm}{Validity}
If, ($a$) in the top network, all sum nodes are complete and product nodes are decomposable, i.e., top network is valid, and ($b$) the template network is sound as defined in Def.~\ref{def:template-sound}, then the SPMN formed by interfacing the top network and the template network unfolded an arbitrary number of times as needed is valid.
\label{thm:RSPMN-valid}
\vspace{-0.1in}
\end{restatable}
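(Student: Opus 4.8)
The plan is to argue by induction on the number of times the template is unfolded, assembling the network from the bottom up. Write $k\ge 1$ for the sequence length and, for $1\le j\le k$, let $R_j$ denote the sub-network obtained by interfacing the template copies $T_j,T_{j+1},\dots,T_k$, where the leaf latent interface nodes of the last copy $T_k$ have been removed (equivalently, set to the constant $1$ with utility-to-pass set to $0$, pruning any sum or product parent left childless, exactly as described following Def.~\ref{def:top_network}). The roots of $R_j$ are precisely the root interface nodes of $T_j$, and the fully unfolded \rspmn{} is the top network with its leaf latent interface nodes identified with the roots of $R_1$. I would prove the following statement $P(j)$ by downward induction on $j$: in $R_j$ every sum node is complete, every product node is decomposable, every max node is max-complete and max-unique, and all roots of $R_j$ share a common scope.

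For the base case $P(k)$, soundness of the template gives that the internal sum, product, and max nodes of $T_k$ already satisfy the local conditions with respect to the template scopes and that all root interface nodes have a common scope; deleting a set of leaf nodes only removes those atoms from every scope, which preserves equality of sibling scopes (hence sum- and max-completeness, and the common-scope property of the roots), preserves disjointness of sibling scopes (hence decomposability), and leaves max-uniqueness untouched. For the inductive step I would assume $P(j+1)$ and form $R_j$ by identifying, for each $l\in L$, the latent leaf $l$ of $T_j$ with the root $f(l)$ of a single shared copy of $R_{j+1}$; this is well defined because $f\colon L\to Ir$ is a bijection onto the roots of $R_{j+1}$. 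Nodes interior to the attached $R_{j+1}$ inherit their conditions from $P(j+1)$ with unchanged scopes, and max-uniqueness across the interface is immediate because decision variables are time-indexed: a decision variable of step $j{+}1$ occurs on a root-to-leaf path only within $R_{j+1}$ (at most once, by $P(j+1)$) and one of step $j$ only within $T_j$ (at most once, by soundness). The remaining work is the sum, product, and max nodes of $T_j$: by $P(j+1)$ all roots of $R_{j+1}$ carry the \emph{same} scope $\sigma$, so the new scope of a node $v$ of $T_j$ consists of its step-$j$ (non-latent) atoms together with $\sigma$ iff its template scope contained any latent atom. For a sum or max node of $T_j$, soundness makes all its children's template scopes equal, so either all or none gain $\sigma$ and the new scopes stay equal — completeness and max-completeness survive; likewise the root interface nodes of $T_j$ keep their common scope, which is the last clause of $P(j)$.

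The one genuinely delicate point, and where I expect the main obstacle to lie, is the product node case. After substitution a product node of $T_j$ would become non-decomposable if two of its children each contained some latent atom, since both children's scopes would then acquire the common, nonempty set $\sigma$. This cannot happen precisely because soundness evaluates decomposability of the template with the leaf latent nodes already carrying that common scope — this is exactly what the requirement that the leaf latent scopes be ``related to'' the common root-interface scope, together with decomposability of the template, amounts to — so two children each containing a latent leaf would already share the variables of $\sigma$ inside the template and violate decomposability there. Hence at most one child of any product node of $T_j$ carries a latent leaf, so after substitution at most one child's scope gains $\sigma$ while the step-$j$ atoms remain pairwise disjoint, and decomposability is preserved. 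This completes the induction, so $R_1$ satisfies $P(1)$. Finally, attaching the top network identifies each of its latent interface leaves with a root of $R_1$, all carrying the common scope $\sigma_1$; the top network has no max nodes, and the same ``substitute a single common scope for every latent leaf'' argument shows that hypothesis ($a$) — completeness of its sum nodes and decomposability of its product nodes, read with the latent leaves assigned $\sigma_1$ as the scope relations in the template definition dictate — is preserved. Therefore the whole unfolded network is complete, decomposable, max-complete, and max-unique, i.e., a valid SPMN.
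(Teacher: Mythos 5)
Your proof is correct and follows essentially the same route as the paper's: an induction on the number of template unfoldings whose engine is the observation that soundness forces all root interface nodes (and hence all leaf latent interface nodes) to share a common scope, so that substituting one copy of the network for the latent leaves preserves the equality and disjointness relations among sibling scopes. The only differences are organizational --- you assemble the network bottom-up where the paper grows it downward from the top network plus one template --- and your explicit argument that no product node of the template can have two children each containing a latent leaf (else decomposability would already fail inside the sound template) is, if anything, more careful than the paper's treatment, which subsumes that point under the blanket claim that no scope relations change upon interfacing.
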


\vspace{-0.1in}
\section{Learning of \rspmn{}s}
\label{sec:LearnRSPMN}
\vspace{-0.05in}


We present a complete algorithm for learning the structures and parameters of a valid top network and a sound template network from data whose schema was outlined in Section~\ref{subsec:schema}. Each data record of length $T$ is a capture of an episode during which a decision-maker interacts with the environment for $T$ steps (observing the state, acting, and obtaining reward). Let there be $E$ such episodes. For convenience, we denote a tuple 
$(I_0, d_1, I_1, d_2, \ldots, I_{m-1}, d_m, I_m,u)^t$ as $\tau^t$. Consequently, the data set has E records each consisting of $T$ tuples $\langle \tau^0, \tau^1, \ldots, \tau^T \rangle_e$ where $e = 1,2, \ldots, E$. Let us also note that all variables related to time step $t+1$ assume their values after time step $t$. This is reflected in the expanded $P^\prec$, which now specifies the partial order not only among variables of a single time step but also includes variables of the next time step. This is sufficient because the partial order among variables of two consecutive steps does not change over time.

\begin{algorithm}[!ht]
\small
\DontPrintSemicolon

    \SetKwInOut{Input}{Input}
    \Input{
        Dataset $\langle \tau^0, \tau^1, \ldots, \tau^T \rangle_e$ where $e = 1,2, \ldots, E$; 
        Partial Order $P^\prec$
    }
  \SetKwInOut{Output}{Output}
  \Output{top network, template network}
  
  $\Scal{}^{t=2}$ $\leftarrow$ Run {\sc LearnSPMN} over wrapped 2-time step data $\langle \tau^0, \tau^1 \rangle_{e'}$~~ $e' = 1,2, \ldots, T \times E$ \;
  Create top network and set of root interface nodes $Ir$ from $\Scal{}^{t=2}$ \;
  Create initial template network from $Ir$ and $\Scal{}^{t=2}$\;
  Revise initial template using sequence data to obtain the final template
   
\caption{{\sc LearnRSPMN}}
\label{alg:LearnRSPMN}
\vspace{-0.05in}
\end{algorithm}

Algorithm~\ref{alg:LearnRSPMN} presents the four main steps involved in learning the \rspmn{} from data. We refer to this algorithm as {\sc LearnRSPMN}. We describe each of these steps below in more detail and show the algorithms in the online Appendix. We also illustrate their applications on a simple 2$\times$2 grid problem. 

Each cell in the grid is represented by two binary state variables $X,Y$, which represent the $x, y$ coordinates of the cell, respectively. Here, the top-left cell is $(0,0)$ and $(1,1)$ indexes the bottom-right cell. The agent can decide to either move in one of the four cardinal directions or perform a No-op, which is represented using a single decision variable, $A$. Let $(0,0)$ be the start state, $(1,0)$ a penalizing state with a reward of -10, and $(1,1)$ the goal state with a reward of 10. All transitions are deterministic and cost -1. Reward is represented by the utility variable $U$. We simulated a randomly-acting agent in this domain for $T=4$ to generate a data set of 10K records with schema as given in Section~\ref{subsec:schema}.  


\vspace{-0.1in}
\paragraph{[Step 1] Learn an SPMN from 2-time step data} Sequential decision-making environments can often be modeled as Markovian. Thus, state transition probabilities and utility functions can be sufficiently learned from data spanning two time steps. Consequently, the first step of {\sc LearnRSPMN} is to use Melibari et al.'s LearnSPMN  algorithm~(\citeyear{Melibari16:Sum})

to learn a valid SPMN, $\Scal{}^{t=2}$, from 2-time step data. Subsequently, $\Scal{}^{t=2}$ serves as a basis for obtaining the template network.

However, which two time steps of the data record should we utilize? One might think that it may be sufficient to limit to tuples of the first two steps in each data record $\langle \tau^0, \tau^1 \rangle$, or to tuples of any particular two consecutive steps $\langle \tau^{t'}, \tau^{t'+1} \rangle$. But, an agent often starts the episode at the same start state and is often at the same intermediate state in a subsequent time step. As such, data in the first two time steps, or for that matter, any fixed pair of time steps, is seldom fully representative of the transition probabilities. Consequently, we consider each consecutive pair of tuples $\langle \tau^t, \tau^{t+1} \rangle$~~$t=0,1,\ldots,T-2$ in each data record and wrap it to create a data set with $T \times E$ rows spanning two time steps. 

\begin{figure}[!ht]
\centerline{\includegraphics[scale=0.75]{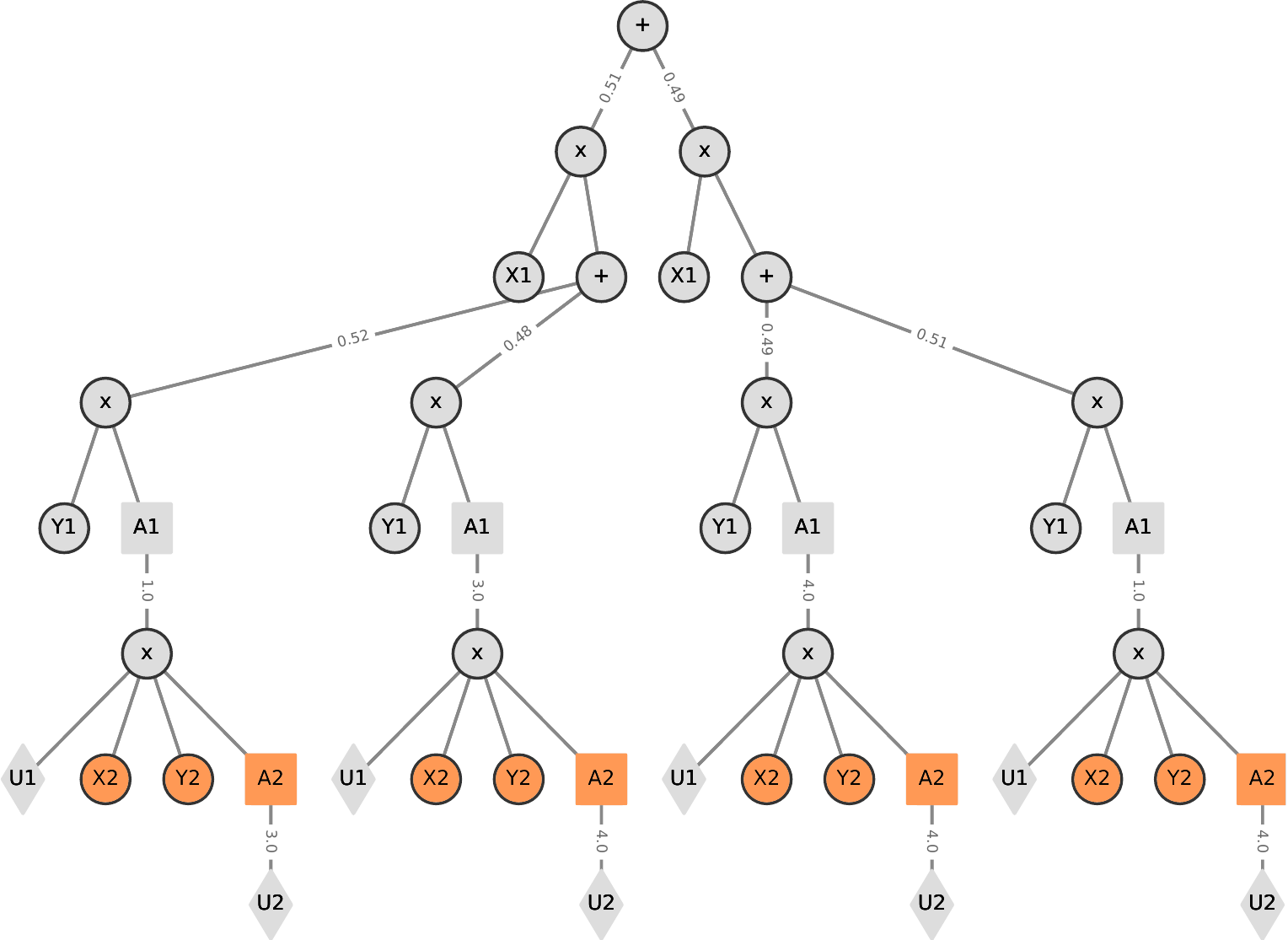}}
\caption{\small SPMN $\Scal{}^{t=2}$ learned on wrapped 2-time step data for the example grid problem. $Xt$, $Yt$, $At$, and $Ut$ represent the corresponding variables for step $t$, where $t \in \{1,2\}$.}
\label{fig:SPMN2t}
\vspace{-0.1in}
\end{figure}

Note that $P^\prec$ is focused on the partial order among the variables of two consecutive time steps, and need not change for use in  LearnSPMN. Then, LearnSPMN is run over the wrapped data set with $P^\prec$ as the partial order. We show the learned 2-time step SPMN for the example grid problem in Fig.~\ref{fig:SPMN2t}.

\paragraph{[Step 2] Obtain top network and $Ir$ nodes} To obtain the nodes in $Ir$, we extract a 1-time step network $\Scal{}^{t=1}$ from $\Scal{}^{t=2}$. We point out that it is necessary to use a 2-time step SPMN to obtain $\Scal{}^{t=1}$. To realize this, let a state-action pair 
$\langle s_0, a_0 \rangle$ transition to state $s_1$ while $\langle s_2, a_0 \rangle$ transition to $s_3$. If $\Scal{}^{t=1}$ is learned from data of a single time step, the correlations between variables of different steps is obviously not ascertained. Due to this, both state values $s_0$ and $s_2$ may get included in a single substructure. However, the 2-time step data makes it clear that they effect differing transitions, which could  be identified during independence testing, thereby modeling them separately by creating different clusters for each of these states in the first step as they result in a transition to the next step. Importantly, this helps identify the behaviorally distinct states of the domain and helps create an interface node for each of them in the template network.

\begin{figure}[!ht]
\centerline{\includegraphics[scale=0.75]{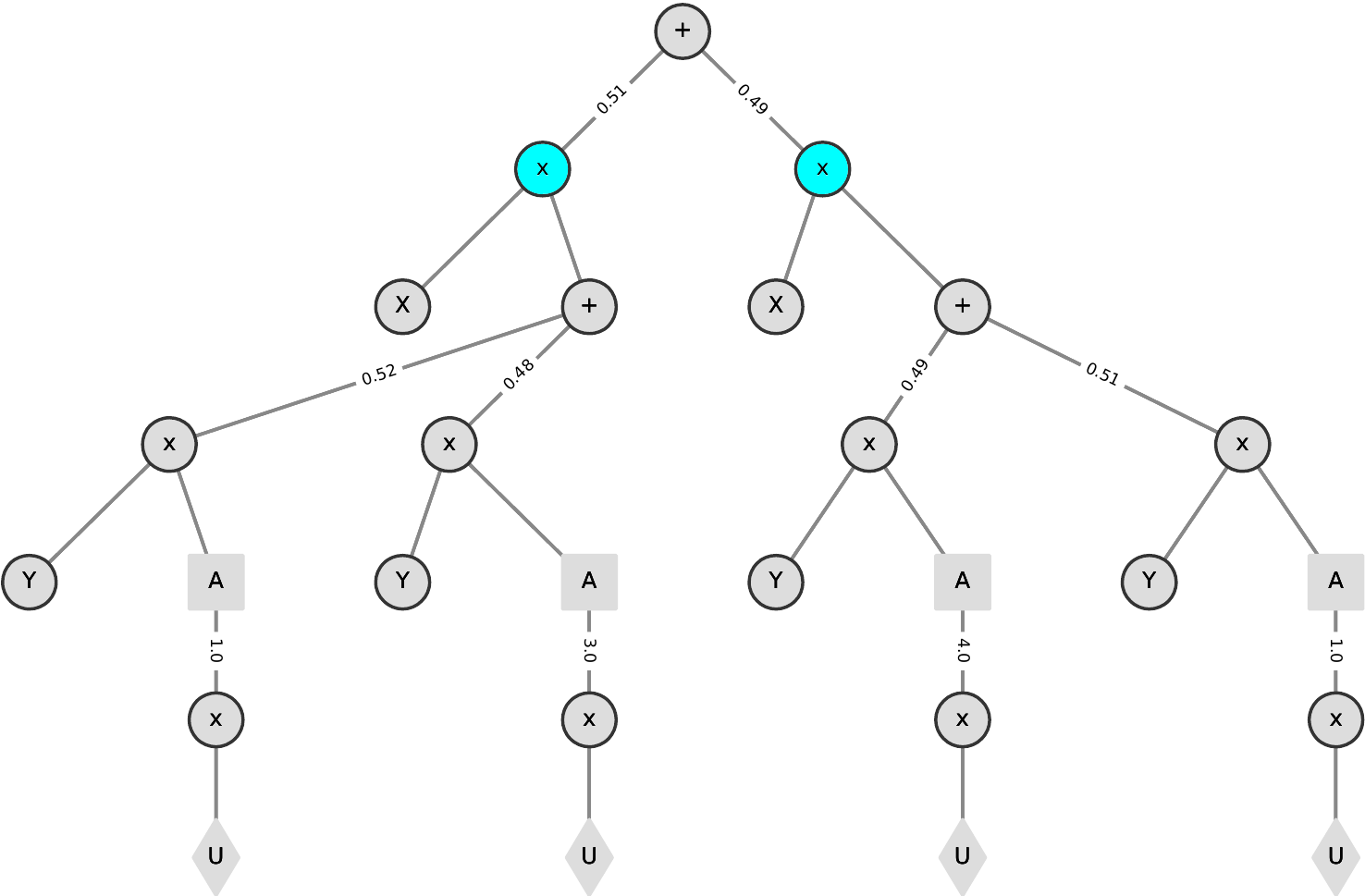}}
\caption{\small $\Scal{}^{t=1}$ obtained from the $\Scal{}^{t=2}$ of Fig.~\ref{fig:SPMN2t} with the orange nodes removed.}
\label{fig:OneStepNetwork}
\vspace{-0.1in}
\end{figure}

To obtain $\Scal{}^{t=1}$, we simply remove all those sum nodes whose immediate children have scopes that consist of subset of variables in the next time step $(I_0, d_1, I_1, d_2, \ldots, I_{m-1}, d_m, I_m, u)^1$. If there are no such sum nodes, but instead variables of the next time step are directly linked to product nodes (as in Fig.~\ref{fig:SPMN2t} where the orange nodes are children of the product nodes), then we remove the nodes corresponding to these children. This results in $\Scal{}^{t=1}$ with no nodes whose scopes lie in variables of the next time step, yet appropriately modeling its impact on the first time step. Figure~\ref{fig:OneStepNetwork} illustrates $\Scal{}^{t=1}$ for the grid problem.


Now, we may obtain the nodes in $Ir$ using $\Scal{}^{t=1}$. Starting from the top-most product nodes (the blue nodes in Fig.~\ref{fig:OneStepNetwork}), 
the root interface nodes are obtained by identifying all the distinct state distributions from these product nodes. This is done by recursively traversing all the branches of the product node until we find a product node without any sum node as a child. 
This corresponds to the four product nodes in Fig.~\ref{fig:OneStepNetwork}
below the top two blue colored nodes.
Each of these product nodes as well as leaf nodes of all the parent product nodes on path to this product node are added to a corresponding set. A product node is created for {\em each} of these sets and the elements of the set are added as children of the product node as shown in Fig.~\ref{fig:topN-InterfaceRootNodes}(a). Each of these product nodes is a root interface node. Each of these interface nodes holds an SPMN that corresponds to a state distribution (there are four interface nodes for the four states in illustration). 

The interface nodes, for example, can help learn the probability of transitioning from one state $s^t$ on taking $a^t$ to some other state $s^{t+1}$ in the next time step. {\em Observe that union of the scopes of the children of each interface node in $Ir$ is identical. This makes the scopes of all the interface nodes $Ir$ identical.}

\begin{figure}[!ht]
\begin{subfigure}{0.6\textwidth}
\centerline{\includegraphics[scale=0.6]{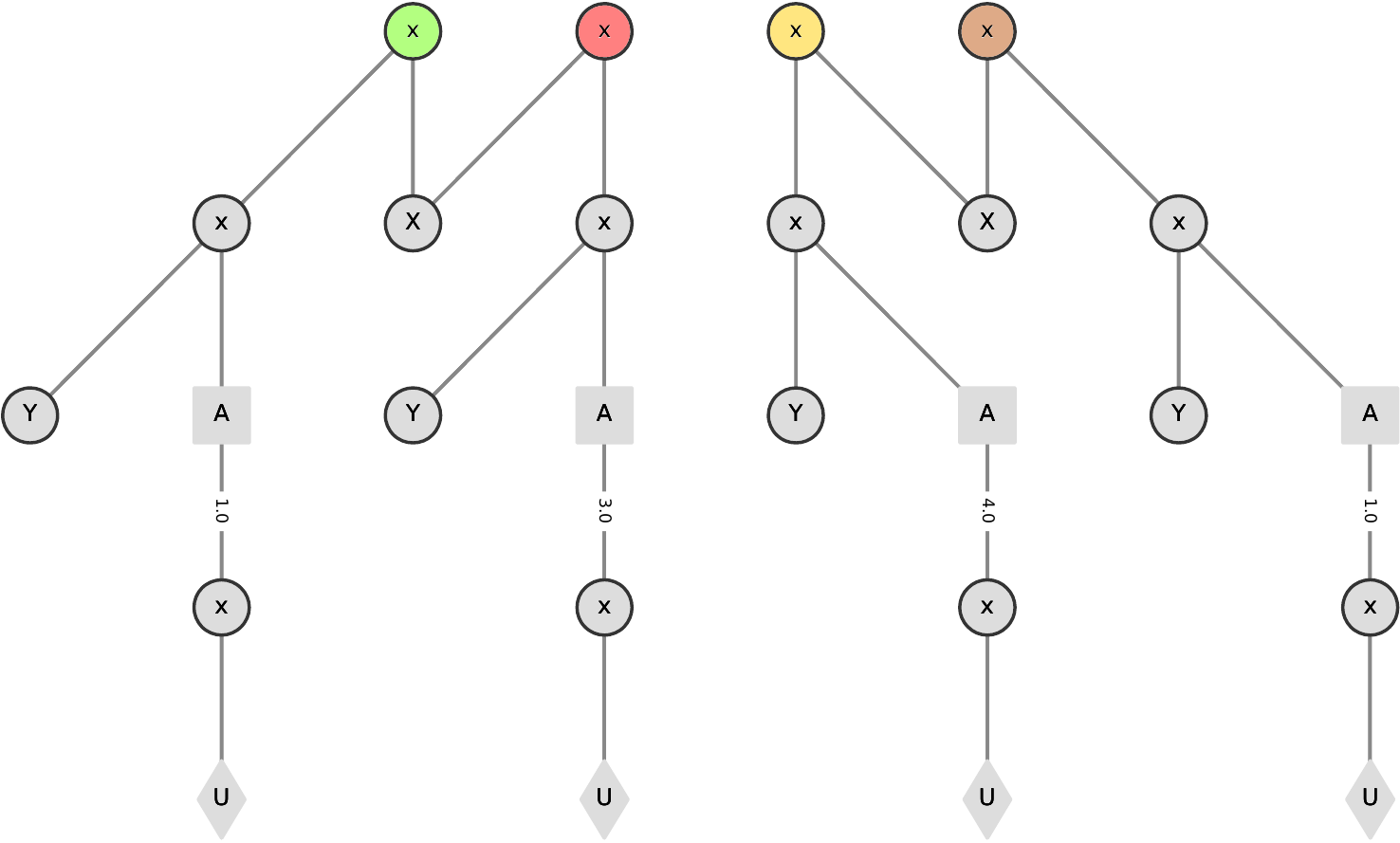}}
\caption{\small The colored product nodes function as the root interface nodes.}
\label{fig:InterfaceRootNodes}
\end{subfigure}
\begin{subfigure}{0.4\textwidth}
\centerline{\includegraphics[scale=0.5]{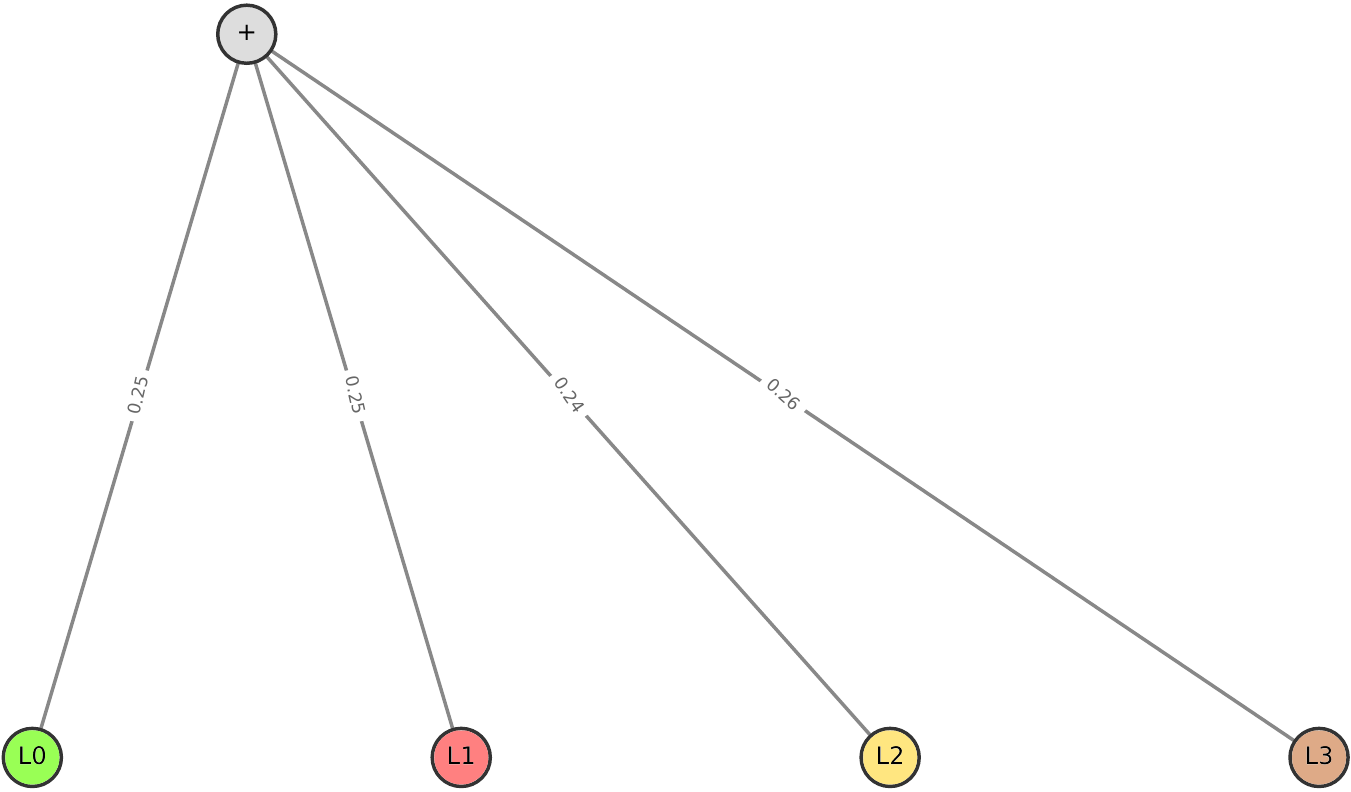}} 
\caption{\small The top network with color-coded latent interface nodes indicating a bijective relationship with the similarly color-coded nodes in $Ir$.}
\label{fig:UnprunedTopNet}
\end{subfigure}
\caption{\small Illustrations of the root interface nodes obtained using the network in Fig.~\ref{fig:OneStepNetwork} and the top network for the example grid problem.}
\label{fig:topN-InterfaceRootNodes}
\vspace{-0.1in}
\end{figure}

The top network is then simply a sum node with as many children as the nodes in $Ir$. The weights on these edges are equal and correspond to a uniform distribution. Each of these children is a latent interface node with a bijective relationship to a root interface node. We show the top network in Fig.~\ref{fig:topN-InterfaceRootNodes}(b).

\begin{figure}[ht]
\centerline{\includegraphics[scale=0.75]{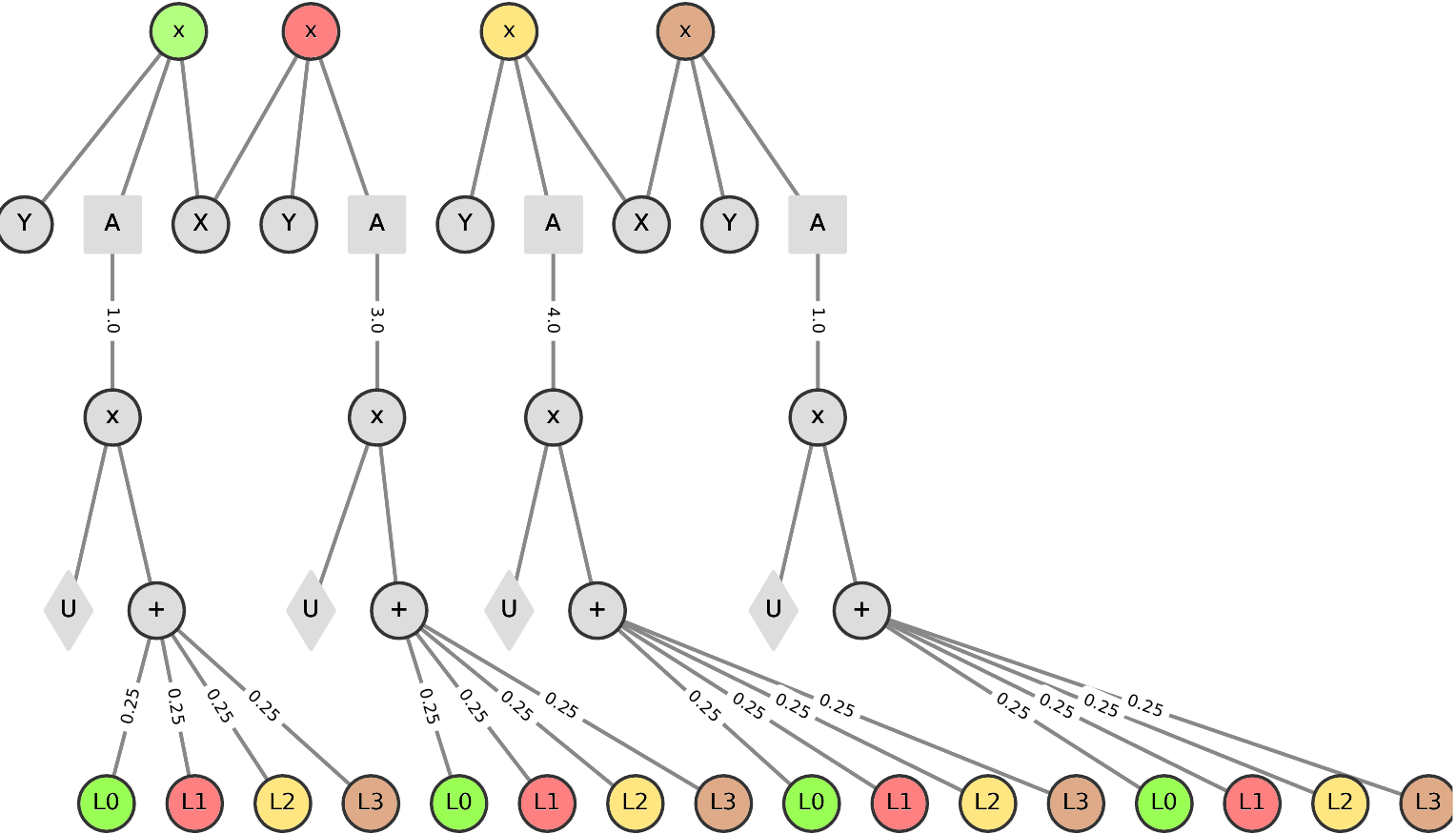}}
\caption{\small An initial template network for the grid problem obtained by attaching $S_L$ to the bottom-most product node following each node in $Ir$. Similarly colored nodes in the sets $L$ and $Ir$ are related through the bijective mapping.}
\label{fig:initial_template}
\vspace{-0.1in}
\end{figure}

\vspace{-0.1in}
\paragraph{[Step 3] Building an initial template} Let $|Ir|$ denote the number of root interface nodes created in the previous step. We begin by creating a subnetwork $S_L$ rooted at a sum node with as many children as $|Ir|$. Each of the children is a leaf {\em latent interface node} observing the following relationship, $f(l_i) = ir_i$, $i = 1, 2, \ldots, |Ir|$, and $f$ is a bijective relationship. As such, each of the latent interface node corresponds to a distinct root interface node. The weights on the edges are equal and correspond to a uniform distribution.

The network from the previous step containing the root interface nodes forms a basis for creating the initial template. Beginning at {\em each} root node in $Ir$, we traverse the graph to the bottom-most sum, product, or max node. In case of a product node, we add a new edge and link it to a new subnetwork $S_L$. In case of a sum or max node,
each of its children nodes is now replaced by a product node with two outgoing edges. One of these edges links to the previous child node while the other edge links to $S_L$. Including all latent nodes in $S_L$ can be effectively thought of as having observed a state and taken a decision at time step $t$ results in reaching all the other states in the next time step $t+1$ with equal probability. We show the initial template network for the example grid problem in Fig.~\ref{fig:initial_template}.

As each latent interface node is related to a root interface node through the bijective mapping and the root interface nodes have identical scopes, the sum node of each $S_L$ is complete. As $S_L$ is added beside every leaf node, it does not impact the properties of other nodes. Therefore, the initial template network is sound.

\vspace{-0.1in}
\paragraph{[Step 4] Learning the final template network} Parameters of the template network are the edge weights of the outgoing edges from the sum nodes including $S_L$. We adapt the hard expectation-maximization for SPNs~\citep{poon2011sum, peharz2015foundations} to the recurrent structure of the template to update the structure and parameters of the initial template. Broadly, it involves performing a bottom-up pass during which the likelihoods of each sum, product, and max node are calculated using a data record $\langle \tau^0, \tau^1, \ldots, \tau^{T-1} \rangle$.

This is followed by a top-down (backpropagation) pass beginning at the rooted top network, which selects a maximum likelihood path and updates the counts on the edges from sum nodes along that path.    

Data from the last tuple $\tau^{T-1}$ is entered in the leaf random variable nodes of the bottom-most template (recall that the bottom-most template network has its leaf latent nodes removed). Likelihoods are propagated upwards through the network by performing the sum, product, and max operations represented by the nodes until we obtain a likelihood for each in node in $Ir^{T-1}$. Using the bijection function that relates the nodes in $L$ with the nodes in $Ir$, we may propagate the likelihoods of the nodes in $Ir^{T-1}$ to the corresponding latent nodes in $L^{T-2}$. The above mentioned bottom-up pass is repeated using the likelihoods of the leaf latent nodes and data from tuple $\tau^{T-2}$ entered into the leaf random variable nodes of time step $T-2$ template, thereby yielding another set of likelihoods for the nodes. Regressing in data to time step 0, the bottom-up pass continues assigning a likelihood to each node in the template network terminating when the root node of the top network is reached. 

\begin{figure}[ht!]
\centerline{\includegraphics[scale=0.75]{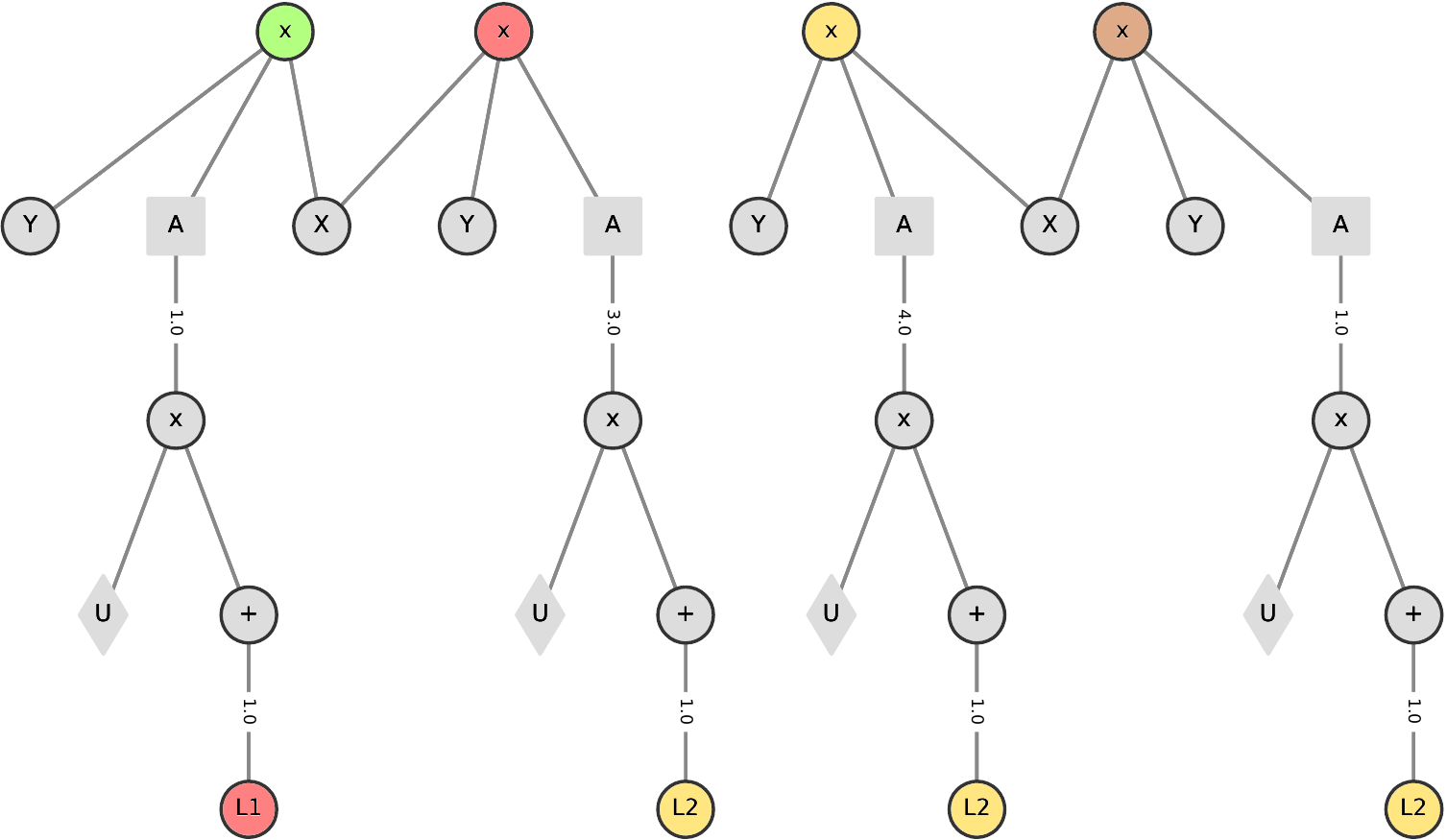}}
\caption{\small The final template network from the bottom and top-down passes for the grid problem. Notice that we retain the leaf latent nodes at each $S_L$ in the initial template with the maximum likelihoods only.}
\label{fig:FinalTemplateNetwork}
\vspace{-0.1in}
\end{figure}

Given the likelihoods computed at each node for each time step, the top-down pass begins at the root node of the top network and at time step 0. It traverses downward visiting each node, selecting the child node with the highest likelihood for each sum node (including subnetwork $S_L$) and updating the count (initialized at zero) on the edge connecting the sum node to the child, selecting the child with the highest likelihood for the max node, and following each edge of the product node. The bijection mapping is used to go from the leaf latent nodes with maximum likelihood in time step $t-1$ to the mapped root interface nodes of time step $t$. An edge from a sum node chosen again has its previous count incremented by 1. The top-down pass terminates at the bottom-most network representing time step $T-1$. 

New weights of outgoing edges from each sum node are obtained as: 
$\frac{\text{count on edge from sum $\rightarrow$ child node}}{\text{\# sum node visited}}$. {\em Thereafter, any leaf latent nodes (and indeed any children of a sum node) with zero counts are pruned.} We may perform both the bottom-up and top-down passes without actually unfolding the template network by following the implicit links represented by the bijection mapping. Applying this step, the learned final template for our illustrative grid problem is shown in Fig.~\ref{fig:FinalTemplateNetwork}. As we prune just the latent interface nodes, the scope of $S_L$ does not change and the template network remains sound.


\vspace{-0.1in}
\section{MEU and Policy Evaluation}
\label{sec:MEU}
\vspace{-0.05in}

We may evaluate the RSPMN formed by interfacing the top network with the learned final template network iterated as many times as the length of each data record to compute the MEU for each state and obtain a policy from the MEU values. 

The MEU value is obtained by evaluating the template network bottom-up as in an SPMN. The utility values of the leaf latent interface nodes of the bottom-most template (last time step) are set to zero. After evaluating the bottom-most network, each root interface node of the template network holds a utility value. In the next iteration, the utility values of the leaf latent interface nodes are assigned the utility values of the corresponding root nodes computed in the previous iteration, and the process is repeated until time step 0 and the top network is evaluated. Assuming that the template network learned a model of the true transitions well, each bottom-up pass through the template can be thought of as performing one Bellman update in the value iteration technique. This can be run until the desired length of the sequential problem is reached. 
Subsequently, each node of the template holds a corresponding expected utility value. 

To get the best action given an observation, the template network is interfaced with the top network and variables are assigned values corresponding to the observed state. A top-down pass starting at the root node of the top network and choosing the action with the MEU at each max node reached  as in an SPMN, the best action(s) comprised of the decision at each max node is obtained for that state. 

\vspace{-0.1in}
\section{Experiments}
\label{sec:experiments}
\vspace{-0.05in}

We implemented {\sc LearnRSPMN} in the SPFlow library~\citep{Molina2019SPFlow} and evaluated its performance on a new testbed of sequential decision-making data sets that adhere to the schema given in Section~\ref{subsec:schema}.

\vspace{-0.1in}
\paragraph{Evaluation testbed} There are few existing data sets on simulations of decision-making domains. Due to this, we created a new testbed of seven data sets, listed in Table~\ref{tbl:Datasets_networksizes} and available for download at \url{https://github.com/c0derzer0/RSPMN}. Four of these data sets are simulations of these domains present in OpenAI's Gym and the remaining three are simulations of RDDLSim~\citep{Sanner:RDDL} MDP domains. Each data set is generated by using a random policy which interacts with the environment and collecting the $\langle$state, action, reward$\rangle$ generated at each step. Each episode is run for $T$ time steps, which is selected to be sufficient to reach the goal state. A new episode is started if either the last time step is reached or if the agent reaches the goal state or some other terminal state.

\begin{table}[!ht]
\begin{small}
\begin{center}
\begin{tabular}{l|c|c|c|c|c|c}
\bf Data set & \bf $|X|$, $|D|$ & \bf \#Episodes & \bf $T$ & \bf $|$Columns$|$ & \bf $|$SPMN$|$ & \bf $|$RSPMN$|$\\ 
\hline
GridUniverse$^1$   & (1, 1)   & 100K  & 8     & 24    & 138,492   & (13, 210)
\\
FrozenLake$^1$     & (1, 1)     & 100K  & 8     & 24    & 1,068,246 & (18, 401) \\
Maze$^1$           & (2, 1)     & 100K  & 8     & 24    & 352,312   &  (11, 184) \\
Taxi$^1$           & (4, 1)     & 20K   & 50    & 150   & -         & (80, 1815) \\
SkillTeaching$^2$   & (12, 4)   & 100K  & 10    & 170   & -          & (137, 4878) \\
Elevators$^2$       & (13, 4)   & 200K  & 10    & 180   & -          & (143, 5390) \\
CrossingTraffic$^2$ & (18, 4)   & 100K  & 15    & 345   & -          & (82, 2349)
\end{tabular}
\end{center}
\end{small}
\vspace{-0.2in}
\caption{\small Superscript 1 denotes simulations of Gym domains and $^2$ denotes simulations of RDDLSim domains. $|X|,|D|$ gives the numbers of state and decision variables in the domain, $|$Columns$|$ is the total number of columns in each data record. We also report the size of the learned structures. $|$SPMN$|$ and $|$RSPMN$|$ gives the sizes of the (top, template) networks respectively. ` - ' denotes that the SPMN was not learned in 12 hrs on an Ubuntu Intel i7 64GB RAM PC.}
\label{tbl:Datasets_networksizes}
\vspace{-0.1in}
\end{table}

For each data set in the testbed, we learn an SPMN using the LearnSPMN algorithm. This was made possible by padding the sequences so that all sequences have the same length -- on reaching a terminal state, the agent stayed in that state until the length of the sequence is $T$. The top and template networks of a RSPMN were learned using our {\sc LearnRSPMN} (Algorithm~\ref{alg:LearnRSPMN}). We show the sizes of the learned networks as the total number of nodes in each, in the ultimate two columns of Table~\ref{tbl:Datasets_networksizes}. Notice the blow up in the sizes of the SPMNs learned for the sequential data sets. The SPMN has many repeated structures for the state distributions over time. For the larger RDDLSim domains, the sizes of the top and template networks also grow but we did not observe a disproportionate growth, while the SPMNs could not be learned. 

\begin{table}[!ht]
\begin{small}
\begin{center}
\begin{tabular}{l|c|c|c|c|c|c|c}
\hline
& \multicolumn{3}{c|}{\bf MEU} & \multicolumn{2}{c|}{\bf Average reward} & \\
\cline{2-4}  \cline{5-6}  
\bf Data set & \bf Optimal & \bf RSPMN & \bf SPMN & \bf RSPMN & \bf BCQ & $\Delta$ \% & \bf LL (RSPMN)\\
\hline
GridUniverse    & 6      & 6       &  6     & 5.9   & 5.9   & 0    & -0.87 \\
FrozenLake      & 0.8   & 0.818   & 0.13    & 0.8   & 0.3   & 62.5 & -6.17 \\
Maze            & 0.966  & 0.966   & 0.052  & 0.96  & 0.96  & 0    & -0.86 \\
Taxi            & 8.9    & 9       & -      & 8.9   & -200  & 60.25   & -2.45 \\
SkillTeaching   & -3.022 & -3.06   & -      &       & -5.42 & 83.3    & -2.09 \\
Elevators       & -7.33  & -7.47   & -      &       & -1.52     & 80   & -4.8 \\
CrossingTraffic & -4.428 & -4.425  & -      &       & 27.98 & 94.7    & -8.44 \\
\hline
\end{tabular}
\end{center}
\end{small}
\vspace{-0.1in}
\caption{\small Our key results comparing the MEUs of the optimal policy, learned \rspmn{}s, SPMNs, and batch-constrained Q-learning. $\Delta$ \% gives the policy deviations between the policies obtained from \rspmn{} and the optimal ones.}
\label{tbl:MEU}
\vspace{-0.1in}
\end{table}

\vspace{-0.1in}
\paragraph{MEU and policy comparisons} Table~\ref{tbl:MEU}, which reports the key results, compares the MEU from the start state of each domain as obtained by evaluating the learned \rspmn{}s and any learned SPMNs with the (near-)optimal values. We obtained the latter from converged DQNs for the Gym domains and by solving the MDP using value iteration for the RDDLSim domains. Observe that \rspmn{}s yield MEUs that are very close to the optimal and significantly better than those from the learned SPMNs. Clearly, the sequential data sets do not have sufficient episodes to learn high-quality SPMNs.

\rspmn{}s and SPMNs also represent a model of the environment as present in the data, which then plays a role in the MEU and policy computation. So, how well did the structure learning method capture the environment dynamics? To answer this question, we simulated the policies obtained from the RSPMNs in their respective Gym environments and noted down the average rewards.~\footnote{We are currently implementing the RDDLSim domains in Gym to allow simulations of our policies.} Table~\ref{tbl:MEU} shows that these average rewards nearly match the MEUs obtained directly from the RSPMNs. This implies that the networks are modeling the environment accurately. We also compare with the average rewards of policies learned by the discrete batch-constrained Q-learning~\citep{fujimoto2019benchmarking} on the data sets as a baseline, a technique similar to DQNs but constrained to learning from a batch of data. For RDDLSim domains, we report on the Q-values of the start states. Clearly, BCQ expects far more data to learn a good policy.  

Next, we report the deviation in policy learned by the RSPMN from the optimal one. This is the total number of states where the actions differ and reported as a percentage $\Delta$ \% of the total number of states. Notice the large deviations for FrozenLake, Taxi, and the RDDLSim domains although the learned policies show MEUs close to the optimal. This is likely due to the presence of multiple optimal policies in these large domains. For the sake of completeness, we also report the log likelihoods of the models in the last column.     

\begin{table}[!ht]
\begin{center}
\begin{small}
\begin{tabular}{l|c|c|c|c|c}
\hline

& \multicolumn{2}{c|}{\bf Template learning} & & \multicolumn{2}{c}{\bf MEU eval}\\
\cline{2-3} \cline{5-6}
\bf Data set & \bf Initial & \bf Final & \bf SPMN learning & \bf RSPMN & \bf SPMN\\ 
\hline
GridUniverse    & 2m 26.33s     & 1m 1.49s       & 4h 25m 31.72s & 0.72s    & 8.8s \\
FrozenLake      & 1m 49.8s      & 2m 02.78s       & 12h 5m 40.77s & 23.21s  &  1m 26.85s \\
Maze            & 2m 51.19s     & 54.84s           & 2h 31m 49.51s & 0.62s  & 24.87s \\
Taxi            & 9m 21.79s     & 2h 28m 15.75s   & - & 18.45s  & - \\
SkillTeaching   & 59m 5.87s     & 29m 28.49s       & - & 3.84s  & - \\
Elevators       &  1h 19m 3.91s & 4h 19m 29.53s   & - & 20s  & - \\
CrossingTraffic &  8m 46.14s & 1h 37m 53.17s   & - & 18.45s  & - \\
\hline
\end{tabular}
\end{small}
\end{center}
\vspace{-0.1in}
\caption{\small Initial and final template learning times are shown in first two columns. These are significantly less than those learning the large SPMNs. Run times of MEU evaluations on the learned SPMNs and \rspmn{}s are shown next.  
}
\label{tbl:times}
\vspace{-0.1in}
\end{table}

Table~\ref{tbl:times} shows our final set of results on the clock time it takes for learning the initial template structure, learning the final template of the \rspmn{} and learning the large SPMNs when possible. The time to learn an SPMN was capped at about 12 hrs. Observe that both learning and evaluating the large SPMNs takes a {\em few orders of magnitude} longer than learning the templates. However, the template learning times also increase for the larger RDDLSim domains with Elevators taking more than 4 hours. On the other hand, the MEU evaluation remains quick for all the domains taking less than a minute.

\vspace{-0.1in}
\section{Related Work}
\label{sec:related}
\vspace{-0.05in}

Melibari et al.~(\citeyear{Melibari16:Dynamic}) presented dynamic (or recurrent) SPNs that generalize SPNs to sequence data. A template network is defined that can be  repeated as many times as needed. A valid SPN is produced by capping the repeated templates with a top and a bottom network. This enables learning an SPN and performing probabilistic inference over data with varying sequence lengths.

An invariance property for the template if met yields recurrent SPNs that are valid. More recently, an online structure learning algorithm has been presented~\citep{Kalra2018} for these SPNs. \rspmn{}s can be viewed as a synergistic integration of some of the temporal concepts of recurrent SPNs with SPMNs; thereby generalizing the model to decision making. Furthermore, the handcrafted template structure in recurrent SPNs is mostly fixed (with just the number of interface nodes allowed to change) while {\sc LearnRSPMN} generates the entire template from data. 

SPNs are related to other graphical models for probabilistic inference such as arithmetic circuits~\citep{park2004differential} and AND/OR graphs~\citep{dechter2007and}. Bhattacharjya and Shachter~(\citeyear{bhattacharjya2007evaluating}) proposed decision circuits as a representation that ensures exact evaluation and solution of influence diagrams in time linear in the size of the network. A decision circuit extends an arithmetic circuit with max nodes for optimized decision making, which is analogous to how SPMNs extend SPNs. However, decision circuits are obtained by compiling IDs. Previous work has shown
that SPMNs are efficiently reducible to decision circuits in time that is linear in the size of the SPMN~\citep{Melibari16:Sum}. However, no
dynamic extension of decision circuits has been presented nor any algorithms to learn decision circuits directly from data.

In contrast to online reinforcement learning, offline (also
labeled as batch) learning seeks to derive an optimal policy from a given set of prior experiences.
This set is analogous to our simulations, and may either be fixed or allowed to grow. While offline
reinforcement learning is not as well studied as its online counterpart, the general approach is to modify
online techniques for use in batch contexts. Prominent methods, such as experience replay~\citep{Lin92:Self} and fitted Q-iteration~\citep{ernst2005tree}, are model-free and utilize the Q-update rule synchronously over all data until convergence. Recently, methods based on deep neural networks such as BCQ~\citep{fujimoto2019benchmarking} have appeared. In contrast, \rspmn{}s offer a {\em model-based approach} to learning the policy from data, which is provably tractable in the size of the network. Additionally, we established \rspmn{}'s favorable performance in comparison to BCQ. 

\vspace{-0.1in}
\section{Concluding Remarks}
\label{sec:conclusion}
\vspace{-0.05in}

We presented \rspmn{}s, a new graphical framework to model sequential decision-making problems in perfectly-observed contexts. These recurrent generalizations of SPMNs do not suffer from an exponential blow-up in size with sequence length, and can be learned directly from data using our learning algorithm. Researchers and practitioners can utilize existing data that fits the requisite schema from their rehearsals or ``dry runs'' of tasks that could benefit from automated decision making. They can also consider collecting and retaining appropriate data from computerized or real-world simulations of the tasks. \rspmn{}s are also useful for off-policy evaluations~\citep{gottesman2019combining} where the environment model is learned from the data, which can be then used for evaluating different policies.

\small
\bibliography{main}
\vspace{-0.1in}

\normalsize

\appendix
\section{Proofs}

\vspace{-0.1in}
\Validity*

\begin{proof} We sketch a proof by induction.
By assumption,
\begin{itemize}
    \item In the top network, all sum nodes are complete and product nodes are decomposable i.e. top network is valid
    \item The template network is sound, i.e., all sum nodes are complete, product nodes are decomposable, max nodes are complete and unique.
\end{itemize}

\textbf{Base case:} We prove that the RSPMN formed by interfacing a top network and a single template network is valid.
The relation between scopes of leaf latent interface nodes in the top network  with the root interface nodes $Ir$ of template network can be inferred from bijective mapping $f$ as,

\begin{align}
    scope(ir_{i}) = scope(ir_{j}) \Rightarrow scope(l_{i}^{top}) = scope(l_{j}^{top}), \label{eqn:1} \\
    (l_{i}^{top}, l_{j}^{top}) \in L, (ir_i, ir_j) \in Ir, f(l_i) \rightarrow ir_i, f(l_j) \rightarrow ir_j \nonumber
\end{align}

Since template network is sound,
\begin{align}
    scope(ir_i) = scope(ir_j), \forall (ir_i, ir_j) \in Ir
\label{eqn:2}
\end{align}

From \ref{eqn:1} and \ref{eqn:2}, the scopes of leaf latent interface nodes of top network become,
\begin{align}
    scope(l_i) = scope(l_j), \forall (l_i, l_j) \in L
\label{eqn:3}
\end{align}

This means that all the leaf latent interface nodes in top network have same scope. Under this condition and assumption, all the sum nodes of the top network are complete and product nodes are decomposable.

Next, the template network is sound. This means the scopes of all leaf latent interface nodes of template network are same because, 

\begin{align}
    & scope(ir_i) = scope(ir_j), \forall (ir_i, ir_j) \in Ir \label{eqn:4} 
\end{align}  

From bijective mapping $f(L) \rightarrow Ir$, we can infer
\begin{align}
    & scope(ir_i) = scope(ir_j) \Rightarrow scope(l_i) = scope(l_j) 
    \label{eqn:5} \\
    & (ir_i, ir_j) \in Ir, (l_i, l_j) \in L \nonumber
\end{align}  
From \ref{eqn:4} and \ref{eqn:5}, the scopes of leaf latent interface nodes of template network become,
\begin{align}
    scope(l_i) = scope(l_j), \forall (l_i, l_j) \in L
\label{eqn:6}
\end{align}

Under this condition and soundness of template, all sum nodes of template are complete, product nodes are decomposable and max nodes are complete and unique.

Now, when the top network is interfaced with a single template network, the scopes of leaf latent interface nodes of top network change based on relation with root interface nodes $Ir$ at $t=0$ as below, 

\begin{align}
    & scope(ir_{i}^{0}) = scope(ir_{j}^{0}) \Rightarrow scope(l_{i}^{top}) = scope(l_{j}^{top}), 
    \label{eqn:7} \\
    & (l_{i}^{top}, l_{j}^{top}) \in L, (ir_{i}^{t+1}, ir_{j}^{t+1}) \in Ir, f() \rightarrow ir_i, f(L_j) \rightarrow ir_j \nonumber
\end{align}

From \ref{eqn:4} we have,
\begin{align}
    scope(ir_{i}^{0}) = scope(ir_{j}^{0}), \forall (ir_{i}^{0}, ir_{j}^{0}) \in Ir 
\label{eqn:8}
\end{align}

From \ref{eqn:7} and \ref{eqn:8},
\begin{align}
    scope(l_{i}^{top}) = scope(l_{j}^{top}), \forall (l_{i}^{top}, l_{j}^{top}) \in L
\label{eqn:9}
\end{align}

The condition from \ref{eqn:9} is equivalent to the condition from \ref{eqn:3}. This means the scopes of leaf latent interface nodes of top network have not changed after interfacing with the template network. So, all the sum nodes of top network are complete and product nodes are decomposable even after interfacing with template network. Since no scope is changed in template network after interfacing, all sum nodes are complete, product nodes are decomposable and max nodes are complete and unique in the template network. Therefore the SPMN formed after interfacing top network with single template network is valid. 

\textbf{Induction hypothesis:} Let us assume that the SPMN formed after interfacing a top network and the template repeated $t$ times is valid, i.e., all sum nodes are complete, product nodes are decomposable and max nodes are complete and unique. Let this SPMN be $R$

\textbf{Inductive step:} We now prove that an SPMN formed by interfacing one more template network (template network repeated $(t+1)$ times in total) with $R$ is a valid SPMN. 

Since the template network is sound, as we have shown in \ref{eqn:4}, \ref{eqn:5} and \ref{eqn:6}, we can show that for template at $t+1$,
\begin{align}
    & scope(ir_{i}^{t+1}) = scope(ir_{j}^{t+1}), \forall (ir_{i}^{t+1}, ir_{j}^{t+1}) \in Ir  \label{eqn:10} \\
    & scope(l_{i}^{t+1}) = scope(l_{j}^{t+1}), \forall (l_{i}^{t+1}, l_{j}^{t+1}) \in L  \label{eqn:11} 
\end{align}
and for template at $t$,
\begin{align}
    & scope(ir_{i}^{t}) = scope(ir_{j}^{t}), \forall (ir_{i}^{t}, ir_{j}^{t}) \in Ir  \label{eqn:12} \\
    & scope(l_{i}^{t}) = scope(l_{j}^{t}), \forall (l_{i}^{t}, l_{j}^{t}) \in L  \label{eqn:13} 
\end{align}

When the template at $t$ is interfaced with the template at $t+1$, the scopes of leaf latent interface nodes of template at $t$ relate to root interface nodes of template at $t+1$ as follows,
\begin{align}
    scope(ir_{i}^{t+1}) = scope(ir_{j}^{t+1}) \Rightarrow scope(l_{i}^t) = scope(l_{j}^t), \label{eqn:14} \\
    (l_{i}^t, l_{j}^t) \in L, (ir_{i}^{t+1}, ir_{j}^{t+1}) \in Ir, f() \rightarrow ir_i, f(L_j) \rightarrow ir_j \nonumber
\end{align}

From \ref{eqn:10} and \ref{eqn:14} we have,
\begin{align}
    scope(l_{i}^{t}) = scope(l_{j}^{t}), \forall (l_{i}^{t}, l_{j}^{t}) \in L  \label{eqn:15} 
\end{align}

The condition from \ref{eqn:15} is equivalent to the condition from \ref{eqn:13}. This means the scopes of leaf latent interface nodes of template network at $t$ have not changed after interfacing with the template network at $t+1$. From inductive hypothesis, SPMN $R$ is valid. Since there is no change in scopes of any of the nodes in $R$ after interfacing with template at $t+1$, all sum nodes are complete, product nodes are decomposable and max nodes are complete and unique in $R$. Since no scope is changed in template network at $t+1$ after interfacing, all sum nodes are complete, product nodes are decomposable and max nodes are complete and unique in the template network at $t+1$. So, all sum nodes are complete, product nodes are decomposable and max nodes are complete and unique in the SPMN formed after interfacing $R$ with template network at $t+1$. Therefore, the SPMN formed after interfacing $R$ with one more template network (template repeated $t+1$ times in total) is valid.

\end{proof}
\vspace{-0.1in}
\section{Algorithms}

Algorithm~\ref{alg:LearnRSPMN} presents the four main steps involved in learning the \rspmn{} from data which we refer to as {\sc LearnRSPMN}

\vspace{-0.1in}
\paragraph{[Step 1] Learn an SPMN from 2-time step data} The first step of {\sc LearnRSPMN} is to use LearnSPMN algorithm
to learn a valid SPMN, $\Scal{}^{t=2}$, from 2-time step data by wrapping the data set with $P^\prec$ as the partial order. This process is shown in algorithm

\begin{algorithm}[!ht]
\DontPrintSemicolon
    
    \SetKwInOut{Input}{input}
    \Input
    {
        Dataset: $\langle \tau^0, \tau^1, \ldots, \tau^T \rangle_e$ where $e = 1,2, \ldots, E$,
        Partial Order: $P^\prec$
    }
    
    \SetKwInOut{Output}{output}
    \Output
    {
        SPMN from 2-time step data: $\Scal{}^{t=2}$\
    }

    $w^0$ $\leftarrow$ Empty,
    $w^1$ $\leftarrow$ Empty
    
    \For{$e$ in $1,2, \ldots, E$}
    {
        \For{$t$ in $ 0, 1, \dots T-1$}
        {
            $w^0$ $\leftarrow$ $w^0.add(\tau^{t})$ \;
            $w^1$ $\leftarrow$ $w^1.add(\tau^{t+1})$

        } 
    }
        
    $W$ $\leftarrow$ $\langle w^0, w^1 \rangle$ \\
    $\Scal{}^{t=1}$ $\leftarrow$ {\sc LearnSPMN}$(W$, $P^\prec)$
    
\caption{SPMN from 2-time step data}
\vspace{-0.05in}
\label{alg:Two step network}
\end{algorithm}

\paragraph{[Step 2] Obtain top network and $Ir$ nodes} First we extract a 1-time step network $\Scal{}^{t=1}$ from $\Scal{}^{t=2}$ and we obtain the nodes in $Ir$ using $\Scal{}^{t=1}$. This is shown in Algorithms~\ref{alg:top network and set of  root interface nodes} and~\ref{alg:RecurInterfaceChidlren}

\begin{algorithm}[!ht]
\DontPrintSemicolon
    
    \SetKwInOut{Input}{input}
    \Input
    {
        Two step SPMN: $\Scal{}^{t=2}$
    }
    
    \SetKwInOut{Output}{output}
    \Output
    {
        Top Network $\Scal{}_O$, Set of  root interface nodes $Ir$
    }

    \SetKwFunction{IrC}{$IrChildren$}
    
    $Queue \leftarrow \Scal{}^{t=2}.root$ \;
    \While{$Queue$ is not $\emptyset$}
    {
        $node  \leftarrow Queue.dequeue $ \;
        \If{$node$ is product}
        {
            \For{each child $c$ in the set of node's children $C_n$}
            {
                \If{$c$ does not have any variable of $(I_0, d_1, I_1, d_2, \ldots, I_{m-1}, d_m, I_m, u)^1$ in its scope}
                {
                Remove $c$ from $C_n$ \;
                }
            }
        }
    }
One step network $\Scal{}^{t=1} \leftarrow$  remaining $\Scal{}^{t=2}$ \
  
    set of nodes seen $E \leftarrow \emptyset$ ;  root interface nodes $Ir \leftarrow \emptyset$ \;
    $Queue \leftarrow  \Scal{}^{t=1}.root$ \;
    
    \While{$Queue$ is not $\emptyset$}
    {
        $node  \leftarrow Queue.dequeue $ \;
        \If{($node$ is product and all $(I_0, d_1, I_1, d_2, \ldots, I_{m-1}, d_m, I_m, u)^0$ are in $node.scope$)}
        {
            topProdChildren $C \leftarrow \emptyset$ \;
            \tcc{ $c.scope$ must be a proper subset}
            \If{(each $c.scope \subset (I_0, d_1, I_1, d_2, \ldots, I_{m-1}, d_m, I_m, u)^1,  \forall c \in node.children $)}
            {
                $C \leftarrow node.children$ \; 
                $node.children \leftarrow \emptyset $
            }
            
            \If {$C$ is not $\emptyset$}
            {
                Create Product node $P$ ; $P.children \leftarrow C$ \;
                $R \leftarrow $ \IrC{$P$} \; 
                latent interface nodes $L \leftarrow \emptyset$ \;
                \For{each $ir children$ set $ir_C$ in $R$}
                {
                    Create interface root product node $ir$ \; 
                    $ir.children \leftarrow ir_C$ \;
                    Create latent interface node $l$ with a bijective mapping $f(l) \rightarrow ir$ \;
                    $L \leftarrow L \cup l$ ; $Ir \leftarrow Ir \cup ir$ \;
                    
                }
                Create Sum node $S_L$ with $S_L.children \leftarrow L$ \;
                $node.children \leftarrow node.children \cup S_L$ 
            }
            \Else
            {
                \For{each child $c$ in the set of node's children $C_n$}
                {
                    \If{$c$ is not in $E$}
                    {
                        Add $c$ to $E$ \; Enqueue $c$ into $Queue$
                    } 
                }
            }
        }
    }
Top Network $\Scal{}_O \leftarrow $ remaining $\Scal{}^{t=1}$ \; 
Set of  root interface nodes $Ir$
\caption{Create top network and set of  root interface nodes}
\label{alg:top network and set of  root interface nodes}
\end{algorithm}

\begin{algorithm}[!ht]
\DontPrintSemicolon

    \SetKwInOut{Input}{input}
    \Input
    {
        Product node $P$
    }
    \SetKwInOut{Output}{output}
    \Output
    {
        set of interface root node children sets
    }
  
    \SetKwFunction{IrC}{$IrChildren$}
    
    \SetKwProg{Fn}{Function}{:}{}
                  \Fn{\IrC{$P$}}{
                       \If{(any $c$ is $ Sum, \forall c \in P.children$)}
                       {
                            \For{each $c$ in $P.children$}
                            {
                                stateVars $ X \leftarrow \emptyset$ \;
                                \If{$c$ is Sum}
                                {
                                    \tcc{a set of sets}
                                    set of $IrChildren$ sets $R \leftarrow \emptyset $ \;
                                    \For{each $c_p$ in $c.children$}
                                    {
                                        $irChildren$ set $ir_C \leftarrow$ \IrC{$c_p$}  \;
                                        $R \leftarrow R \cup ir_C$
                                    }
                                }
                                \Else
                                {
                                    $X \leftarrow X \cup c$
                                }
                             }
                            \For{each $ir_C$ in $R$}
                            {
                                $ir_C \leftarrow ir_C \cup X$
                            }
                            \KwRet $R$\;
                       }
                    \Else
                    {
                      \KwRet $\{\{P\}\}$\;  
                    }
                        
                  }
           
\caption{make sets of $Ir$ children}
\vspace{-0.05in}
\label{alg:RecurInterfaceChidlren}    
\end{algorithm}

\paragraph{[Step 3] Building an initial template}
$Ir$ from the previous step containing the root interface nodes forms a basis for creating the initial template. The sum node $S_L$ whose children are the set of leaf latent nodes $L$ is added to $Ir$ to obtain initial template structure as shown in Algorithm~\ref{alg:InitialTemplateStructure}

\begin{algorithm}[!ht]
\DontPrintSemicolon
    
    \SetKwInOut{Input}{input}
    \Input
    {
        Set of root interface nodes $Ir$
    }
    
    \SetKwInOut{Output}{output}
    \Output{Initial template root interface nodes $Ir$}
    
    $L \leftarrow \emptyset$ \;
    \For {each $ir$ in $Ir$}
    {
        Create a Latent interface node $l$ and $f(l) \rightarrow ir$ \;
        $L \leftarrow L \cup l$
    }
    
    \For {each $ir$ in $I$}
    {
        set of nodes seen $E \leftarrow \emptyset$ \;
        $Queue \leftarrow ir.root$ \;
        \While{$Queue$ is not $\emptyset$}
        {
            $node  \leftarrow Queue.dequeue $ \;
            \If{ (each $c$ is Leaf node, $\forall c \in node.children$)}
            {
                
                Create Sum node $S_L$ \tcc{bottom sum interface node} 
                $S_L.children \leftarrow L$ \;
                $S_L.weights \leftarrow 1/numOf(S_L.children)$ \;
                
                \If{$node$ is product}
                {
                    Add $S_L$ as child of product node
                }
                \Else
                {
                    \For {each $l$ that is Leaf node in set of $node.children$ $C_n$}
                        {
                        Create Product node $P$ \;
                        $P.children \leftarrow S_L \cup l$ \;
                        $C_n \leftarrow C_n \setminus l$ \;
                        $C_n \leftarrow C_n \cup P$
                        }
                    
                }
            }
            \Else
            {
               \For{each $c \in  node.children$}
                    {
                        \If{$c$ is not in $E$}
                        {
                            Add $c$ to $E$ \;
                            Enqueue $c$ into $Queue$
                        }
                    }
            }
        }
    }
    Initial template root interface nodes $Ir$ $\leftarrow$ set of root interface nodes $Ir$

\caption{Create initial template network}
\label{alg:InitialTemplateStructure}
\end{algorithm}

\paragraph{[Step 4] Learning the final template network} By updating the edge weights of the outgoing edges from the sum node $S_L$ whose children are set of leaf latent nodes $L$, we can learn the final template structure as shown in Algorithm~\ref{alg:FinalTemplateStructure}.
\begin{algorithm}[!ht]
\DontPrintSemicolon
    
    \SetKwInOut{Input}{input}
    \Input
    {
        Initial template root interface nodes $Ir$; Top Network $\Scal{}_O$
    }
    
    \SetKwInOut{Output}{output}
    \Output{Final template root interface nodes}
    
    $T \leftarrow$ length of sequence \;
    $ll^{-1}, ll^0, \dots ,ll^{T-1} \leftarrow$ EmptyList \;
    \For {each time step $t$ from $T-1 \dots 0$}
    {
        \If{ $t = T$ }
        {   
            Remove leaf latent nodes from $Ir$ \;
            Assign leaf values $(I_0, d_1, I_1, d_2, \ldots, I_{m-1}, d_m, I_m, u)^t$ to leaves in $Ir$ \;
            $ll^{t} \leftarrow$ bottomUpPass($Ir$) \;
        } 
        \Else
        {
            Assign leaf values $(I_0, d_1, I_1, d_2, \ldots, I_{m-1}, d_m, I_m, u)^t$ to leaves in $Ir$ \;
            Assign $Ir$ values from $ll^{t+1}$ to latent leaves in $Ir$ \;
            $ll^{t} \leftarrow$ bottomUpPass($Ir$) \;
        }
    }
    Assign $Ir$ values from $ll^{0}$ to latent leaves of $\Scal{}_O$ \;
    $ll^{-1} \leftarrow$ bottomUpPass($\Scal{}_O$) \;
    
    \For {each time Step $t$ from $-1 \dots m-1$}
    {
        \If{ $t = -1$ }
        {
            TopDownPass($\Scal{}_O$) \;
            $l \leftarrow$ leaf latent interface node reached
        } 
        \Else
        {
            $I_{l} \leftarrow $ root interface node corresponding to $l$ \;
            TopDownPass($I_{l}$) \;
            Increment counts on outgoing edges visited on sum nodes \;
            $l \leftarrow$ leaf latent interface node reached
        } 
        
    Update weights on sum node $S_L$ whose children are $L$ in $Ir$ using counts \;
    Drop branches with zero counts on $S_L$ \;
    Final template root interface nodes $Ir \leftarrow Ir$
    }
\caption{Final template network}
\label{alg:FinalTemplateStructure}

\end{algorithm}

\vskip 0.2in

\end{document}